\newcommand{\xSrc}{\vec{x}}
\newcommand{\ySrc}{y}
\newcommand{\xTar}{\vec{u}}
\newcommand{\wSrc}{\vec{w}}
\newcommand\mat[1]{\mathbf{#1}}
\renewcommand\vec[1]{\mathbf{#1}}
\newcommand\eg{\emph{e.g.}}
\newcommand\etal{\textit{et al.}}
\newcommand{\muPosSrc}{\mathbf{\mu_{1}}}
\newcommand{\muNegSrc}{\vec{\mu_{0}}}
\newcommand{\sigSrc}{\mat{C_S}}
\newcommand{\sigTar}{\mat{C_T}}
\begin{document}

\title*{Correlation Alignment for Unsupervised Domain Adaptation}
\author{Baochen Sun, Jiashi Feng, and Kate Saenko}
\institute{Baochen Sun \at Microsoft, \email{baochens@gmail.com}
\and Jiashi Feng \at National University of Singapore, \email{elefjia@nus.edu.sg}
\and Kate Saenko \at Boston University, \email{saenko@bu.edu}}
%
%
\maketitle

\abstract{In this chapter, we present CORrelation ALignment (CORAL), a simple yet effective method for unsupervised domain adaptation. CORAL minimizes domain shift by aligning the second-order statistics of source and target distributions, without requiring any target labels. In contrast to subspace manifold methods, it aligns the original feature distributions of the source and target domains, rather than the bases of  lower-dimensional subspaces. It is also much simpler than other distribution matching methods. CORAL performs remarkably well in extensive evaluations on standard benchmark datasets. We first describe a solution that applies a linear transformation to source features to align them with target features before classifier training. For linear classifiers, we propose to equivalently apply CORAL to the classifier weights, leading to added efficiency when the number of classifiers is small but the number and dimensionality of target examples are very high. The resulting CORAL Linear Discriminant Analysis (CORAL-LDA) outperforms LDA by a large margin on standard domain adaptation benchmarks. Finally, we extend CORAL to learn a nonlinear transformation that aligns correlations of layer activations in deep neural networks (DNNs). The resulting Deep CORAL approach works seamlessly with DNNs and achieves state-of-the-art performance on standard benchmark datasets. Our code is available at:~\url{https://github.com/VisionLearningGroup/CORAL}}
\section{Introduction}
\label{sec:intro}

Machine learning is very different from human learning. Humans are able to learn from very few labeled examples and apply the learned knowledge to new examples in novel conditions. In contrast, traditional machine learning algorithms assume that the training and test data are independent and identically distributed (i.i.d.) and supervised machine learning methods only perform well when the given extensive labeled data are from the same distribution as the test distribution. However, this assumption rarely holds in practice, as the data are likely to change over time and space. To compensate for the degradation in performance due to domain shift,~\emph{domain adaptation}~\cite{saenko2010adapting,sa,gfk,long_cvpr,bmvc15,coral,tzeng_arxiv15,daume} tries to~\emph{transfer} knowledge from source (training) domain to target (test) domain. Our approach is in line with most existing unsupervised domain adaptation approaches in that we first transform the source data to be as close to the target data as possible. Then a classifier trained on the transformed source domain is applied to the target data.

In this chapter, we mainly focus on the unsupervised scenario as we believe that leveraging the unlabeled target data is key to the success of domain adaptation. In real world applications, unlabeled target data are often much more abundant and easier to obtain. On the other side, labeled examples are very limited and require human annotation. So the question of how to utilize the unlabeled target data is more important for practical visual domain adaptation. For example, it would be more convenient and applicable to have a pedestrian detector automatically adapt to the changing visual appearance of pedestrians rather than having a human annotator label every frame that has a different visual appearance.

Our goal is to propose a simple yet effective approach for domain adaptation that researchers with little knowledge of domain adaptation or machine learning could easily integrate into their application. In this chapter, we describe a ``frustratingly easy'' (to borrow a phrase from~\cite{daume})~\emph{unsupervised} domain adaptation method called CORrelation ALignment or CORAL~\cite{coral} in short. CORAL aligns the input feature distributions of the source and target domains by minimizing the difference between their second-order statistics. The intuition is that we want to capture the structure of the domain using feature correlations. As an example, imagine a target domain of Western movies where most people wear hats; then the ``head'' feature may be positively correlated with the ``hat'' feature. Our goal is to transfer such correlations to the source domain by transforming the source features.

In Section~\ref{sec:coral}, we describe a linear solution to CORAL, where the distributions are aligned by re-coloring whitened source features with the covariance of the target data distribution. This solution is simple yet efficient, as the only computations it needs are (1) computing covariance statistics in each domain and (2) applying the whitening and re-coloring linear transformation to the source features. Then, supervised learning proceeds as usual--training a classifier on the transformed source features. 

For linear classifiers, we can equivalently apply the CORAL transformation to the classifier weights, leading to better efficiency when the number of classifiers is small but the number and dimensionality of the target examples are very high. We present the resulting CORAL--Linear Discriminant Analysis (CORAL-LDA)~\cite{coral-lda} in Section~\ref{sec:coral-lda}, and show that it outperforms standard Linear Discriminant Analysis (LDA) by a large margin on cross domain applications. We also extend CORAL to work seamlessly with deep neural networks by designing a layer that consists of a differentiable CORAL loss~\cite{dcoral}, detailed in Section~\ref{sec:dcoral}. On the contrary to the linear CORAL, Deep CORAL learns a nonlinear transformation and also provides end-to-end adaptation. Section~\ref{sec:exp} describes extensive quantitative experiments on several benchmarks, and Section~\ref{sec:con} concludes the chapter.
\section{Linear Correlation Alignment}
\label{sec:coral}

In this section, we present CORrelation ALignment (CORAL) for unsupervised domain adaptation and derive a linear solution. We first describe the formulation and derivation, followed by the main linear CORAL algorithm and its relationship to existing approaches. In this section and Section~\ref{sec:coral-lda}, we constrain the transformation to be linear. Section~\ref{sec:dcoral} extends CORAL to learn a nonlinear transformation that aligns correlations of layer activations in deep neural networks (Deep CORAL).

\subsection{Formulation and Derivation}
We describe our method by taking a multi-class classification problem as the running example. Suppose we are given source-domain training examples $D_S=\{\xSrc_i\}$, $\xSrc\in\mathbb{R}^d$ with labels $L_S=\{\ySrc_i\}$, $\ySrc \in\{1,...,L\}$, and target data $D_T=\{\xTar_i\}$, $\xTar \in \mathbb{R}^d$. Here both $\xSrc$ and $\xTar$ are the $d$-dimensional feature representations $\phi(I)$ of input $I$. Suppose $\mu_s,\mu_t$ and $C_{S}, C_{T}$ are the feature vector means and covariance matrices. Assuming that all features are normalized to have zero mean and unit variance, $\mu_t=\mu_s=0$ after the normalization step, while $C_{S} \neq C_{T}$.

To minimize the distance between the second-order statistics (covariance) of the source and target features, we apply a linear transformation $A$ to the original source features and use the Frobenius norm as the matrix distance metric:
      \begin{equation}
      \begin{aligned}
      &~\underset{A}{\min} {\| C_{\hat{S}} - C_{T} \|}^2_F\\
      &= \underset{A}{\min} {\| A^{\top}C_{S}A - C_{T} \|}^2_F
      \end{aligned}
      \label{eq:obj}
      \end{equation}
where $C_{\hat{S}}$ is covariance of the transformed source features $D_sA$ and ${\|\cdot\|}^2_F$ denotes the squared matrix Frobenius norm. 

If $\mathrm{rank}(C_S) \geq \mathrm{rank}(C_T)$, then an analytical solution can be obtained by choosing $A$ such that $C_{\hat{S}}= C_{T}$.
However, the data typically lie on a lower dimensional manifold~\cite{outlooks,gfk,sa}, and so the covariance matrices are likely to be low rank~\cite{who}. We derive a solution for this general case, using the following lemma.
\begin{lemma} 
\label{lemma:svt}
Let $Y$ be a real matrix of rank $r_Y$ and X a real matrix of rank at most $r$, where ${r}\leqslant{r_Y}$; let $Y={U_Y}{\Sigma_Y}{V_Y}$ be the SVD of $Y$, and ${\Sigma_{Y[1:r]}}$, $U_{Y[1:r]}$, $V_{Y[1:r]}$ be the largest $r$ singular values and the corresponding left and right singular vectors of $Y$ respectively. Then, $X^{*} = U_{Y[1:r]}{\Sigma_{Y[1:r]}}{V_{Y[1:r]}}^{\top}$ is the optimal solution to the problem of $\underset{X}\min{\| X - Y \|}^2_F$.~\cite{SVT}
\end{lemma}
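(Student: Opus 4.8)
The plan is to recognize this as the Eckart--Young--Mirsky theorem and to prove it in the two standard steps: establish a lower bound on ${\|X-Y\|}_F^2$ that holds for \emph{every} admissible $X$, and then verify that the truncated SVD $X^{*}$ attains it. First I would reduce to a diagonal target matrix by exploiting the unitary invariance of the Frobenius norm. Writing $Y = U_Y\Sigma_Y V_Y^{\top}$ and substituting $Z = U_Y^{\top} X V_Y$, we have ${\|X-Y\|}_F^2 = {\|Z-\Sigma_Y\|}_F^2$, and $Z$ ranges over exactly the matrices of rank at most $r$ as $X$ does. Hence it suffices to minimize ${\|Z-\Sigma_Y\|}_F^2$ over $\mathrm{rank}(Z)\le r$, i.e. we may assume $Y=\Sigma_Y=\mathrm{diag}(\sigma_1\ge\sigma_2\ge\cdots)$.

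For the lower bound, I would invoke Weyl's inequality for singular values, $\sigma_{i+j-1}(A+B)\le\sigma_i(A)+\sigma_j(B)$, applied with $A=\Sigma_Y-Z$, $B=Z$ and $j=r+1$. Since $\mathrm{rank}(Z)\le r$ forces $\sigma_{r+1}(Z)=0$, this yields $\sigma_i(\Sigma_Y-Z)\ge\sigma_{i+r}(\Sigma_Y)=\sigma_{i+r}$ for all $i\ge 1$, and therefore
\[
{\|\Sigma_Y-Z\|}_F^2=\sum_{i\ge 1}\sigma_i(\Sigma_Y-Z)^2\;\ge\;\sum_{i\ge 1}\sigma_{i+r}^2\;=\;\sum_{k>r}\sigma_k^2 .
\]
(An equivalent route that avoids quoting Weyl's inequality: for $X$ of rank $\le r$ with column space $S$, orthogonality of the ranges of $P_S$ and $I-P_S$ gives ${\|Y-X\|}_F^2={\|(I-P_S)Y\|}_F^2+{\|P_SY-X\|}_F^2\ge{\|(I-P_S)Y\|}_F^2$; then minimizing ${\|(I-P_S)Y\|}_F^2$, i.e. maximizing $\mathrm{tr}(P_S YY^{\top})$, over subspaces with $\dim S\le r$ gives $\sum_{i\le r}\sigma_i^2$ by the Ky Fan / Rayleigh--Ritz principle, attained by the span of the top $r$ left singular vectors of $Y$.)

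Finally I would check tightness. In the diagonalized picture the truncated SVD is $Z^{*}=\mathrm{diag}(\sigma_1,\dots,\sigma_r,0,0,\dots)$, so $\Sigma_Y-Z^{*}=\mathrm{diag}(0,\dots,0,\sigma_{r+1},\sigma_{r+2},\dots)$ and ${\|\Sigma_Y-Z^{*}\|}_F^2=\sum_{k>r}\sigma_k^2$, matching the bound; undoing the unitary substitution shows $X^{*}=U_{Y[1:r]}\Sigma_{Y[1:r]}V_{Y[1:r]}^{\top}$ is a minimizer, which is the claim. I expect the only genuine obstacle to be the lower bound: either pinning down the correct form of Weyl's inequality, or, in the alternative route, the variational characterization of $\max_{\mathrm{rank}\,P\le r}\mathrm{tr}(PM)$ for positive semidefinite $M$; the rest is bookkeeping with unitary invariance and the diagonal form. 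I would also note that the hypothesis $r\le r_Y$ is used only to guarantee that $X^{*}$ genuinely has rank $r$ (so that the stated solution matches the rank constraint tightly); it is not needed for optimality itself.
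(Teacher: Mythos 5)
Your proof is correct, but note that the paper does not actually prove this lemma: it is quoted from the literature (the citation is to the singular value thresholding paper; the result is the classical Eckart--Young--Mirsky theorem) and then used as a black box in the proof of the subsequent theorem. What you supply is the standard self-contained argument, and both of your routes are sound: the reduction to a diagonal $Y$ by unitary invariance of the Frobenius norm is legitimate (with the full SVD, $Z=U_Y^{\top}XV_Y$ ranges over exactly the rank-$\le r$ matrices), the lower bound $\sigma_i(\Sigma_Y-Z)\ge\sigma_{i+r}(Y)$ via Weyl's inequality with $\sigma_{r+1}(Z)=0$ is the right tool, and the projection/Ky Fan alternative \(\|Y-X\|_F^2=\|(I-P_S)Y\|_F^2+\|P_SY-X\|_F^2\) is an equally valid, more elementary substitute. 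Your tightness check and your closing remark that $r\le r_Y$ only ensures the truncated SVD actually has rank $r$ (optimality holds regardless) are also correct; one cosmetic point is that the paper writes the SVD as $Y=U_Y\Sigma_YV_Y$ without the transpose, and your reading $Y=U_Y\Sigma_YV_Y^{\top}$ is the intended one. So relative to the paper you have strengthened a cited fact into a proved one, at the cost of invoking either Weyl's singular-value inequality or the Ky Fan maximum principle, which is unavoidable for this theorem.
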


\begin{theorem} 
Let $\Sigma^{+}$ be the Moore-Penrose pseudoinverse of $\Sigma$, $r_{C_S}$ and $r_{C_T}$ denote the rank of $C_S$ and $C_T$ respectively.
Then, $A^{*} = U_{S}{\Sigma_S^{+}}^{\frac{1}{2}}{U_{S}}^{\top} U_{T[1:r]}{\Sigma_{T[1:r]}}^{\frac{1}{2}}{U_{T[1:r]}}^{\top}$ is the optimal solution to the problem in Equation~\eqref{eq:obj} with $r = \min(r_{C_S}, r_{C_T})$.
\end{theorem}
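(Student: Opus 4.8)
The plan is to recast the optimization over the transformation $A$ as an optimization over the induced matrix $B:=A^{\top}C_SA$, solve the reduced problem with Lemma~\ref{lemma:svt}, and then check that the proposed $A^{*}$ realizes the minimizer. First I would fix the spectral decompositions $C_S=U_S\Sigma_SU_S^{\top}$ and $C_T=U_T\Sigma_TU_T^{\top}$; since both matrices are symmetric positive semidefinite, these coincide with their SVDs, with $\Sigma_S,\Sigma_T$ diagonal and nonnegative. Factoring $C_S=EE^{\top}$ with $E=U_S\Sigma_S^{1/2}$ gives $A^{\top}C_SA=(E^{\top}A)^{\top}(E^{\top}A)$, so every attainable value of $A^{\top}C_SA$ is positive semidefinite of rank at most $r_{C_S}$; conversely, a short argument (factor any such target as $W\Lambda^{1/2}\cdot\Lambda^{1/2}W^{\top}$ and rotate the factor into the row space of $E$) shows every positive semidefinite matrix of rank $\le r_{C_S}$ is attained. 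Hence Equation~\eqref{eq:obj} is equivalent to $\min\{\|B-C_T\|_F^2: B\succeq 0,\ \mathrm{rank}(B)\le r_{C_S}\}$.

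Next I would temporarily drop the positive-semidefiniteness constraint and apply Lemma~\ref{lemma:svt} with $Y=C_T$ and rank bound $r_{C_S}$: the best rank-$\le r_{C_S}$ approximant of $C_T$ is the truncation of its SVD to its $r=\min(r_{C_S},r_{C_T})$ largest singular values, namely $B^{*}=U_{T[1:r]}\Sigma_{T[1:r]}U_{T[1:r]}^{\top}$, with residual $\sum_{j>r}\sigma_{T,j}^{2}$, where $\sigma_{T,j}$ is the $j$-th largest eigenvalue of $C_T$. Because $C_T$ is positive semidefinite this truncation is itself positive semidefinite, hence feasible for the constrained problem; therefore $B^{*}$ is optimal there as well, and it suffices to exhibit some $A$ with $A^{\top}C_SA=B^{*}$.

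Finally I would verify that $A^{*}$ does exactly this. Write $A^{*}=A_1A_2$ with $A_1=U_S(\Sigma_S^{+})^{1/2}U_S^{\top}$ and $A_2=U_{T[1:r]}\Sigma_{T[1:r]}^{1/2}U_{T[1:r]}^{\top}$, both symmetric, so $A^{*\top}C_SA^{*}=A_2\,(A_1C_SA_1)\,A_2$. Since $(\Sigma_S^{+})^{1/2}\Sigma_S(\Sigma_S^{+})^{1/2}$ is the diagonal matrix carrying a $1$ in each position matching a nonzero eigenvalue of $C_S$, one gets $A_1C_SA_1=U_S\,\mathrm{diag}(1,\dots,1,0,\dots,0)\,U_S^{\top}=:P_S$, the orthogonal projector onto $\mathrm{range}(C_S)$, so $A^{*\top}C_SA^{*}=A_2P_SA_2$. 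The main obstacle is the last identity $A_2P_SA_2=B^{*}$: it is immediate once the columns of $U_{T[1:r]}$ lie in $\mathrm{range}(C_S)$ (then $P_SU_{T[1:r]}=U_{T[1:r]}$ and $A_2P_SA_2=A_2^{2}=U_{T[1:r]}\Sigma_{T[1:r]}U_{T[1:r]}^{\top}$), which is precisely the situation intended here and holds, for instance, whenever $C_S$ is full rank or $\mathrm{range}(C_T)\subseteq\mathrm{range}(C_S)$; I would therefore either carry this as a standing assumption or, if only the optimal value is wanted, note that Steps~1--2 already guarantee some $A$ attains $B^{*}$. Combining the three steps yields $\|A^{*\top}C_SA^{*}-C_T\|_F^2=\sum_{j>r}\sigma_{T,j}^{2}=\min_A\|A^{\top}C_SA-C_T\|_F^2$, which is the claim.
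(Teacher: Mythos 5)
Your proof is correct and follows essentially the same route as the paper: reduce the problem to choosing the transformed covariance $C_{\hat{S}}=A^{\top}C_SA$, invoke Lemma~\ref{lemma:svt} to identify the optimal choice $U_{T[1:r]}\Sigma_{T[1:r]}U_{T[1:r]}^{\top}$ with $r=\min(r_{C_S},r_{C_T})$, and then exhibit an $A$ realizing it. Two points where you go beyond the paper are worth noting. First, you actually characterize the attainable set, showing that every positive semidefinite matrix of rank at most $r_{C_S}$ arises as $A^{\top}C_SA$; the paper only argues that the rank cannot increase, and it treats the cases $r_{C_S}>r_{C_T}$ and $r_{C_S}\leqslant r_{C_T}$ separately, which your single application of the lemma subsumes. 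Second, the condition you flag in your last step --- that the columns of $U_{T[1:r]}$ lie in $\mathrm{range}(C_S)$, so that (in your notation) $A_2P_SA_2=U_{T[1:r]}\Sigma_{T[1:r]}U_{T[1:r]}^{\top}$ --- is not an extra weakness of your argument: it is exactly what the paper uses implicitly when it asserts that the right-hand side can be rewritten as $E^{\top}\Sigma_SE$, since
\begin{equation*}
E^{\top}\Sigma_SE=U_{T[1:r]}\Sigma_{T[1:r]}^{\frac{1}{2}}U_{T[1:r]}^{\top}\,P_S\,U_{T[1:r]}\Sigma_{T[1:r]}^{\frac{1}{2}}U_{T[1:r]}^{\top},
\end{equation*}
where $P_S=U_S{\Sigma_S^{+}}^{\frac{1}{2}}\Sigma_S{\Sigma_S^{+}}^{\frac{1}{2}}U_S^{\top}$ is the orthogonal projector onto $\mathrm{range}(C_S)$, and this collapses to the truncated $C_T$ only when $P_SU_{T[1:r]}=U_{T[1:r]}$. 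So your writeup is, if anything, more careful than the paper's: it cleanly separates the unconditional claim about the optimal value, which your attainability step secures, from the conditional claim that the particular closed form $A^{*}$ attains it.
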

\begin{proof}  Since $A$ is a linear transformation, $A^{\top}C_SA$ does not increase the rank of $C_S$. Thus, $r_{C_{\hat{S}}}\leqslant{r_{C_{S}}}$. Since $C_S$ and $C_T$ are symmetric matrices, conducting SVD on $C_S$ and $C_T$ gives $C_S=U_S{\Sigma_S}{U_S}^{\top}$ and $C_T = U_T\Sigma_TU_T^\top$ respectively. We first find the optimal value of $C_{\hat{S}}$ through considering the following two  cases:
\vspace{-0.05in}
\begin{case}
$r_{C_S}>{r_{C_T}}$. The optimal solution is $C_{\hat{S}} = C_T$. Thus, $C_{\hat{S}} = U_{T}{\Sigma_{T}}{U_{T}}^{\top} = U_{T[1:r]}{\Sigma_{T[1:r]}}{U_{T[1:r]}}^{\top}$ is the optimal solution to Equation~\eqref{eq:obj} where $r = r_{C_T}$.
\end{case}
\vspace{-0.05in}
\begin{case}
$r_{C_S}\leqslant{r_{C_T}}$. Then, according to Lemma \ref{lemma:svt}, $C_{\hat{S}} = U_{T[1:r]}{\Sigma_{T[1:r]}}{U_{T[1:r]}}^{\top}$ is the optimal solution to Equation~\eqref{eq:obj} where $r = r_{C_S}$.
\end{case}
Combining the results in the above two cases yields that ${{C_{\hat{S}}}} = U_{T[1:r]}{\Sigma_{T[1:r]}}{U_{T[1:r]}}^{\top}$ is the optimal solution to Equation~\eqref{eq:obj} with $r = \min(r_{C_S}, r_{C_T})$.
We then proceed to solve for $A$ based on the above result. 
Letting $C_{\hat{S}} = {A}^{\top}C_S A$, we can write
\begin{equation*}
{A}^{\top}C_{S}{A}= U_{T[1:r]}{\Sigma_{T[1:r]}}{U_{T[1:r]}}^{\top}.
\end{equation*}
Since $C_S = U_S \Sigma_S {U_S}^\top$, we have
\begin{equation*}
{A}^{\top}U_S{\Sigma_S}{U_S}^{\top}{A}= U_{T[1:r]}{\Sigma_{T[1:r]}}{U_{T[1:r]}}^{\top}.
\end{equation*}
This gives:
\begin{equation*}
{({U_S}^{\top}A)}^{\top}{\Sigma_S}({U_S}^{\top}{A})= U_{T[1:r]}{\Sigma_{T[1:r]}}{U_{T[1:r]}}^{\top}.
\end{equation*}
Let $E ={\Sigma_S^{+}}^{\frac{1}{2}} {U_S}^{\top} {U_{T[1:r]}} {\Sigma_{T[1:r]}}^{\frac{1}{2}}{U_{T[1:r]}}^{\top}$, then the right hand side of the above equation can be re-written as ${E}^{\top}{\Sigma_S}E$. This gives
\begin{align*}
&{({U_S}^{\top}A)}^{\top}{\Sigma_S}({U_S}^{\top}{A})= {E}^{\top}{\Sigma_S}E
\end{align*}
By setting ${U_S}^{\top}A$ to $E$, we obtain the optimal solution of $A$ as 
\begin{equation} 
\begin{aligned}
A^{*}&={U_S}E\\
&=(U_{S}{\Sigma_S^{+}}^{\frac{1}{2}}{U_{S}}^{\top})(U_{T[1:r]}{\Sigma_{T[1:r]}}^{\frac{1}{2}}{U_{T[1:r]}}^{\top}).
 \end{aligned}
\label{eq:slu}
\end{equation}
\end{proof}

\begin{figure}[t]
\centering
\includegraphics[width=0.7\linewidth]{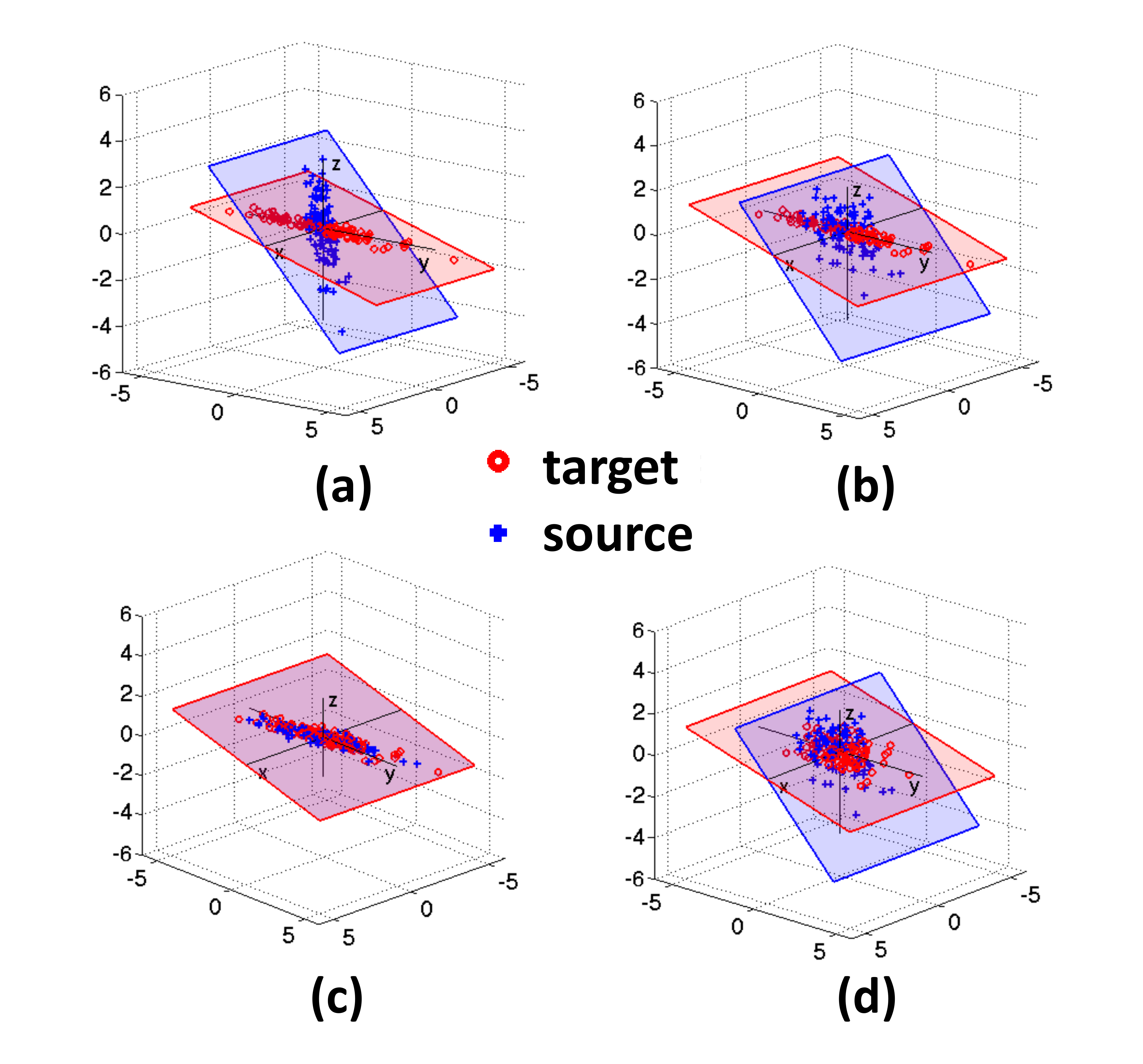}
\caption{\small \textbf{(a-c)} Illustration of CORrelation ALignment (CORAL) for Domain Adaptation: (a) The original source and target domains have different distribution covariances, despite the features being normalized to zero mean and unit standard deviation. This presents a problem for transferring classifiers trained on source to target. (b) The same two domains after source decorrelation, i.e. removing the feature correlations of the source domain. (c) Target re-correlation, adding the correlation of the target domain to the source features. After this step, the source and target distributions are well aligned and the classifier trained on the adjusted source domain is expected to work well in the target domain. \textbf{(d)} One might instead attempt to align the distributions by whitening both source and target. However, this will fail since the source and target data are likely to lie on different subspaces due to domain shift. (Best viewed in color)}
\label{fig:variance}
\end{figure}

\subsection{Algorithm}
\label{subsec:algo}
Figures~\ref{fig:variance}(a-c) illustrate the linear CORAL approach. Figure~\ref{fig:variance}(a) shows example original source and target data distributions. We can think of the transformation $A$ intuitively as follows: the first part $U_{S}{\Sigma_S^{+}}^{\frac{1}{2}}{U_{S}}^{\top}$ whitens the source data, while the second part $U_{T[1:r]}{\Sigma_{T[1:r]}}^{\frac{1}{2}}{U_{T[1:r]}}^{\top}$ re-colors it with the target covariance. This is illustrated in Figure~\ref{fig:variance}(b) and Figure~\ref{fig:variance}(c) respectively.

In practice, for the sake of efficiency and stability, we can avoid the expensive SVD steps and perform traditional data whitening and coloring.
Traditional whitening adds a small regularization parameter $\lambda$ to the diagonal elements of the covariance matrix to explicitly make it full rank and then multiplies the original features by its inverse square root (or square root for coloring.) 
This is advantageous because: (1) 
it is faster\footnote{the entire CORAL transformation takes less than one minute on a regular laptop for dimensions as large as $D_S\in\mathbb{R}^{795\times4096}$ and $D_T\in\mathbb{R}^{2817\times4096}$} and more stable, as SVD on the original covariance matrices might not be stable and might be slow to converge; (2) as illustrated in Figure~\ref{fig:sens}, the performance is similar to the analytical solution in Equation~\eqref{eq:slu} and very stable with respect to~$\lambda$. In the experiments provided at the end of this chapter we set~$\lambda$ to 1. The final algorithm can be written in four lines of MATLAB code as illustrated in Algorithm~\ref{alg:coral}.
 
\begin{figure}
\centering
\includegraphics[width=0.5\columnwidth]{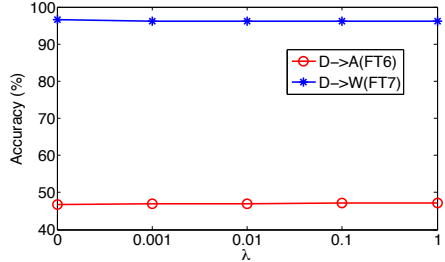}
\caption{\small Sensitivity of CORAL to the covariance regularization parameter~$\lambda$ with~$\lambda \in$ \{0, 0.001, 0.01, 0.1, 1\}. The plots show classification accuracy on target data for two domain shifts (blue and red). When $\lambda = 0$, there is no regularization and we use the analytical solution in Equation~\eqref{eq:slu}. Please refer to Section~\ref{subsec:recog} for details of the experiment.}
\label{fig:sens}
\end{figure}

\begin{algorithm}
\caption{CORAL for Unsupervised Domain Adaptation}
\begin{small}
\begin{algorithmic} 
\STATE \textbf{Input:} Source Data $D_S$, Target Data $D_T$
\STATE \textbf{Output:} Adjusted Source Data $D_{s}^{*}$
\STATE $C_S = cov(D_S) + eye(size(D_S, 2))$
\STATE $C_T = cov(D_T) + eye(size(D_T, 2))$
\STATE $D_S = D_S*C_S^{\frac{-1}{2}}$  ~~~~~~~~~~~~~~~~~~~~~~~~~~~~~~~~\% whitening source
\STATE $D_{S}^{*} = D_S*C_T^{\frac{1}{2}}$  ~~~~~~~~~~~~~~~~~~~~~~~~~~~~~~~~~\% re-coloring with target covariance
\end{algorithmic} 
\end{small}
\label{alg:coral}
\end{algorithm}

One might instead attempt to align the distributions by whitening both source and target. As shown in Figure~\ref{fig:variance}(d), this will fail as the source and target data are likely to lie on different subspaces due to domain shift. An alternative approach would be whitening the target and then re-coloring it with the source covariance. However, as demonstrated in~\cite{outlooks,sa} and our experiments, transforming data from source to target space gives better performance. 
This might be due to the fact that by transforming the source to target space the classifier is trained using both the label information from the source and the unlabelled structure from the target.

After CORAL transforms the source features to the target space, a classifier $f_{\wSrc}$ parametrized by $\wSrc$ can be trained on the adjusted source features and directly applied to target features. For a linear classifier $f_{\wSrc}(I) =  \wSrc^T \phi(I)$, we can apply an equivalent transformation to the parameter vector $\wSrc$ (\eg, $f_{\wSrc}(I) =  (\wSrc^TA) \phi(I)$) instead of the features (\eg, $f_{\wSrc}(I) =  \wSrc^T(A\phi(I))$). This results in added efficiency when the number of classifiers is small but the number and dimensionality of target examples is very high. For linear SVM, this extension is straightforward. In Section~\ref{sec:coral-lda}, we apply the idea of CORAL to another commonly used linear classifier--Linear Discriminant Analysis (LDA). LDA is special in the sense that its weights also use the covariance of the data. It is also extremely efficient for training a large number of classifiers~\cite{who}. 

\subsection{Relationship to Existing Methods} 
It has long been known that input feature normalization improves many machine learning methods, \eg,~\cite{batchnorm}. However, CORAL does not simply perform feature normalization, but rather aligns two different distributions. Batch Normalization~\cite{batchnorm} tries to compensate for~\emph{internal} covariate shift by normalizing each mini-batch to be zero-mean and unit-variance. However, as illustrated in Figure~\ref{fig:variance}(a), such normalization might not be enough. Even if used with full whitening, Batch Normalization may not compensate for~\emph{external} covariate shift: the layer activations will be decorrelated for a source point but not for a target point. What's more, as mentioned in Section~\ref{subsec:algo}, whitening both domains is not a successful strategy.

Recent state-of-the-art unsupervised approaches project the source and target distributions into a lower-dimensional manifold and find a transformation that brings the subspaces closer together~\cite{gopalan-iccv11,gfk,sa,outlooks}. CORAL avoids subspace projection, which can be costly and requires selecting the hyper-parameter that controls the dimensionality of the subspace, $k$. We note that subspace-mapping approaches~\cite{outlooks,sa} only align the top $k$ eigenvectors of the source and target covariance matrices. On the contrary, CORAL aligns the covariance matrices, which can only be re-constructed using all eigenvectors and eigenvalues. Even though the eigenvectors can be aligned well, the distributions can still differ a lot due to the difference of eigenvalues between the corresponding eigenvectors of the source and target data. CORAL is a more general and much simpler method than the above two as it takes into account~\emph{both} eigenvectors~\emph{and} eigenvalues of the covariance matrix without the burden of subspace dimensionality selection. 

Maximum Mean Discrepancy (MMD) based methods (\eg, TCA~\cite{tca}, DAN~\cite{dan_long15}) for domain adaptation can be interpreted as ``moment matching'' and can express arbitrary statistics of the data. Minimizing MMD with a polynomial kernel ($k(x,y) = (1+x'y)^q$ with $q=2$) is similar to the CORAL objective, however, no previous work has used this kernel for domain adaptation nor proposed a closed form solution to the best of our knowledge. 
The other difference is that MMD based approaches usually apply the~\emph{same} transformation to both the source and target domain. As demonstrated in~\cite{ref:kulis_cvpr11,outlooks,sa}, asymmetric transformations are more flexible and often yield better performance for domain adaptation tasks. Intuitively, symmetric transformations find a space that ``ignores'' the differences between the source and target domain while asymmetric transformations try to ``bridge'' the two domains.

\section{CORAL Linear Discriminant Analysis}
\label{sec:coral-lda}

In this section, we introduce how CORAL can be applied for aligning multiple linear classifiers. In particular, we take LDA as the example for illustration, considering LDA is  a commonly used and effective linear classifier. Combining CORAL and LDA gives a new efficient adpative learning approach CORAL-LDA.  We use the task of object detection as a running example to explain CORAL-LDA.

We begin by describing the decorrelation-based approach to detection proposed in~\cite{who}. Given an image $I$, it follows the sliding-window paradigm, extracting a $d$-dimensional feature vector $\phi(I,b)$ at each window $b$ across all locations and at multiple scales. 
It then scores the windows using a scoring function 
      \begin{align}
      \label{chap4_eq:score}
         f_{\wSrc}(I,b) &=  \wSrc^{\top}  \phi(I,b). 
      \end{align}
In practice, all windows with values of $f_{\vec{w}}$ above a predetermined threshold are considered positive detections.

In recent years, use of the linear SVM as the scoring function $f_{\wSrc}$, usually with Histogram of Gradients (HOG) as the features $\phi$, has emerged as the predominant object detection paradigm. Yet, as observed by Hariharan~\etal~\cite{who}, training SVMs can be expensive, especially because it usually involves costly rounds of hard negative mining. Furthermore, the training must be repeated for each object category, which makes it scale poorly with the number of categories.

Hariharan~\etal~\cite{who} proposed a much more efficient alternative, learning $f_{\wSrc}$ with Linear Discriminant Analysis (LDA). LDA is a well-known linear classifier that models the training set of examples $\xSrc$ with labels $\ySrc \in\{0,1\}$ as being generated by $p(\xSrc,\ySrc)=p(\xSrc|\ySrc)p(\ySrc)$. $p(\ySrc)$ is the prior on class labels and the class-conditional densities are normal distributions 
      \begin{align}
      \label{eq:lda1}
         p(\xSrc|\ySrc) &= \mathcal{N}(\xSrc; {\vec{\mu}}^{\ySrc}, \sigSrc),
      \end{align}
where the feature vector covariance $\sigSrc$ is assumed to be the same for both positive and negative (background) classes. In our case, the feature is represented by $\xSrc = \phi(I,b)$. The resulting classifier is given by
\vspace{-0.1in}
      \begin{align}
      \label{eq:lda2}
         \wSrc &= {\sigSrc}^{-1} ( \muPosSrc - \muNegSrc)
      \end{align}
The innovation in \cite{who} was to re-use $\sigSrc$ and $\muNegSrc$, the background mean, for all categories, reducing the task of learning a new category model to computing the average positive feature, $\muPosSrc$. This was accomplished by calculating $\sigSrc$ and $\muNegSrc$ for the largest possible window and subsampling to estimate all other smaller window sizes. Also, $\sigSrc$ was shown to have a sparse local structure, with correlation falling off sharply beyond a few nearby image locations.

Like other classifiers, LDA learns to suppress non-discriminative structures and enhance the contours of the object. However it does so by learning the global covariance statistics once for all natural images, and then using the inverse covariance matrix to remove the non-discriminative correlations, and the negative mean to remove the average feature. LDA was shown in~\cite{who} to have competitive performance to SVM, and can be implemented both as an exemplar-based~\cite{exemplarsvm} or as deformable parts model (DPM)~\cite{dpm}.

\begin{figure}[t]
\centering
\includegraphics[width=0.8\linewidth]{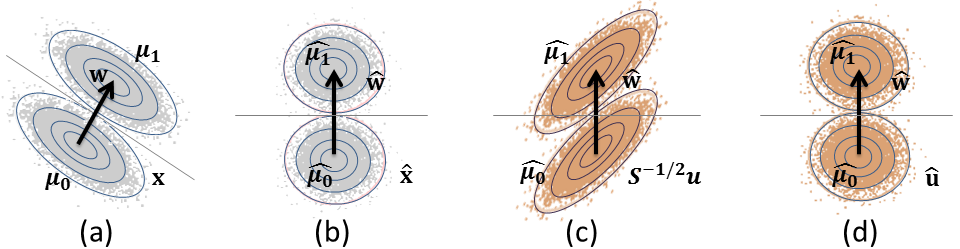}
\caption{\small (a) Applying a linear classifier $\wSrc$ learned by LDA to source data $\xSrc$  is equivalent to (b) applying classifier $\hat{\wSrc}=\sigSrc^{-1/2}\wSrc$ to decorrelated points $\sigSrc^{-1/2}\xSrc$. (c) However, target points $\xTar$ may still be correlated after $\sigSrc^{-1/2}\xTar$, hurting performance. (d) Our method uses target-specific covariance to obtain properly decorrelated $\hat{\xTar}$. }
\label{chap4_fig:adapt}
\end{figure}

We observe that estimating global statistics  $\sigSrc$ and $\muNegSrc$ once and re-using them for all tasks may work when training and testing in the same domain, but in our case, the source training data is likely to have different statistics from the target data. Figure~\ref{chap4_fig:wrongdomain} illustrates the effect of centering and decorrelating a positive mean using global statistics from the wrong domain. The effect is clear: important discriminative information is removed while irrelevant structures are not. 

Based on this observation, we propose an adaptive decorrelation approach to detection. Assume that we are given labeled training data $\{ \xSrc, \ySrc \}$ in the source domain (\eg, virtual images rendered from 3D models), and unlabeled examples $\xTar$ in the target domain (\eg, real images collected in an office environment). Evaluating the scoring function $f_{\wSrc}(\xSrc)$ in the source domain is equivalent to first de-correlating the training features 
 $\hat{\xSrc} = {\sigSrc}^{-1/2} \xSrc$, 
computing their positive and negative class means 
$\hat{\muPosSrc} = {\sigSrc}^{-1/2} \muPosSrc $ 
and 
$\hat{\muNegSrc} = {\sigSrc}^{-1/2} \muNegSrc $ 
and then projecting the decorrelated feature onto the decorrelated difference between means, $f_{\wSrc}(\xSrc) = \hat{\vec{w}}^{\top} \hat{\xSrc}$, where $\hat{\wSrc}=(\hat{\muPosSrc} - \hat{\muNegSrc})$.
This is illustrated in Figure~\ref{chap4_fig:adapt}(a-b).

However, as we saw in Figure~\ref{chap4_fig:wrongdomain}, the assumption that the input is properly decorrelated does not hold if the input comes from a target domain with a different covariance structure. Figure~\ref{chap4_fig:adapt}(c) illustrates this case, showing that ${\sigSrc}^{-1/2} \xTar$ does not have isotropic covariance. Therefore, $\wSrc$ cannot be used directly. 

\begin{figure}
\centering
\includegraphics[width=0.5\linewidth]{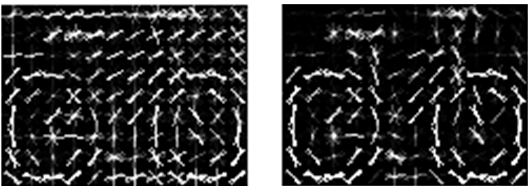}
\caption{\small Visualization of classifier weights of bicycle (decorrelated with mismatched-domain covariance (left) v.s. with same-domain covariance (right)).}
\label{chap4_fig:wrongdomain}
\end{figure}

We may be able to compute the covariance of the target domain on the unlabeled target points $\xTar$, but not the positive class mean. Therefore, we would like to re-use the decorrelated mean difference $\hat{\wSrc}$, but adapt to the covariance of the target domain.
In the rest of the chapter, we make the assumption that the difference between positive and negative means is the same in the source and target. This may or may not hold in practice, and we discuss this further in Section~\ref{sec:exp}.

Let the estimated target covariance be $\sigTar$. We first decorrelate the target input feature with its inverse square root, and then apply $\hat{\wSrc}$ directly, as shown in Figure~\ref{chap4_fig:adapt}(d). The resulting scoring function is:

      \begin{equation}
      \begin{aligned}
         f_{\hat{\wSrc}}(\xTar) &= \hat{\wSrc}^{\top} \hat{\xTar}  \\
			  &= {({\sigSrc}^{-1/2} (\muPosSrc-\muNegSrc))}^{\top} ({\sigTar}^{-1/2} \xTar) \\
			 &= {(({\sigTar}^{-1/2})^{\top}{\sigSrc}^{-1/2} (\muPosSrc-\muNegSrc))}^{\top}\xTar
	  \end{aligned}
	  \label{eq:lda4}
      \end{equation}
      
This corresponds to a transformation 
$({\sigTar}^{-1/2})^{\top} ({\sigSrc}^{-1/2})$
instead of the original whitening ${\sigSrc}^{-1}$  being applied to the difference between means to compute $\wSrc$. Note that if source and target domains are the same, then 
$({\sigTar}^{-1/2})^{\top} ({\sigSrc}^{-1/2})$ 
equals to ${\sigSrc}^{-1}$ since both ${\sigTar}$ and $\sigSrc$ are symmetric. In this case, Equation~\ref{eq:lda4} ends up the same as Equation~\ref{eq:lda2}.

In practice, either the source or the target component of the above transformation may also work, or even statistics from similar domains. However, as we will see in Section~\ref{subsec:coral-lda}, dissimilar domain statistics can significantly hurt performance. Furthermore, if either source or target has only images of the positive category available, and cannot be used to properly compute background statistics, the other domain can still be used.

CORAL-LDA works in a purely unsupervised way. Here, we extend it to semi-supervised adaptation when a few labeled examples are available in the target domain. Following~\cite{ICRA14}, a simple adaptation method is used whereby the template learned on source positives is combined with a template learned on target positives, using a weighted combination. The key difference with our approach is that the target template uses target-specific statistics.

In~\cite{ICRA14}, the author uses the same background statistics as~\cite{who} which were estimated on 10,000 natural images from the PASCAL VOC 2010 dataset. Based on our analysis above, even though these background statistics were estimated from a very large amount of real image data, it will not work for all domains. In section~\ref{subsec:coral-lda}, our results confirm this claim.
\section{Deep CORAL}
\label{sec:dcoral}
In this section, we extend CORAL to work seamlessly with deep neural networks by designing a differentiable CORAL loss. Deep CORAL enables end-to-end adaptation and also learns more a powerful nonlinear transformation. It can be easily integrated into different layers or network architectures. Figure~\ref{fig:d-coral} shows a sample Deep CORAL architecture using our proposed correlation alignment layer for deep domain adaptation. We refer to Deep CORAL as any deep network incorporating the CORAL loss for domain adaptation. 

\begin{figure}[t]
\centering
\includegraphics[width=0.8\linewidth]{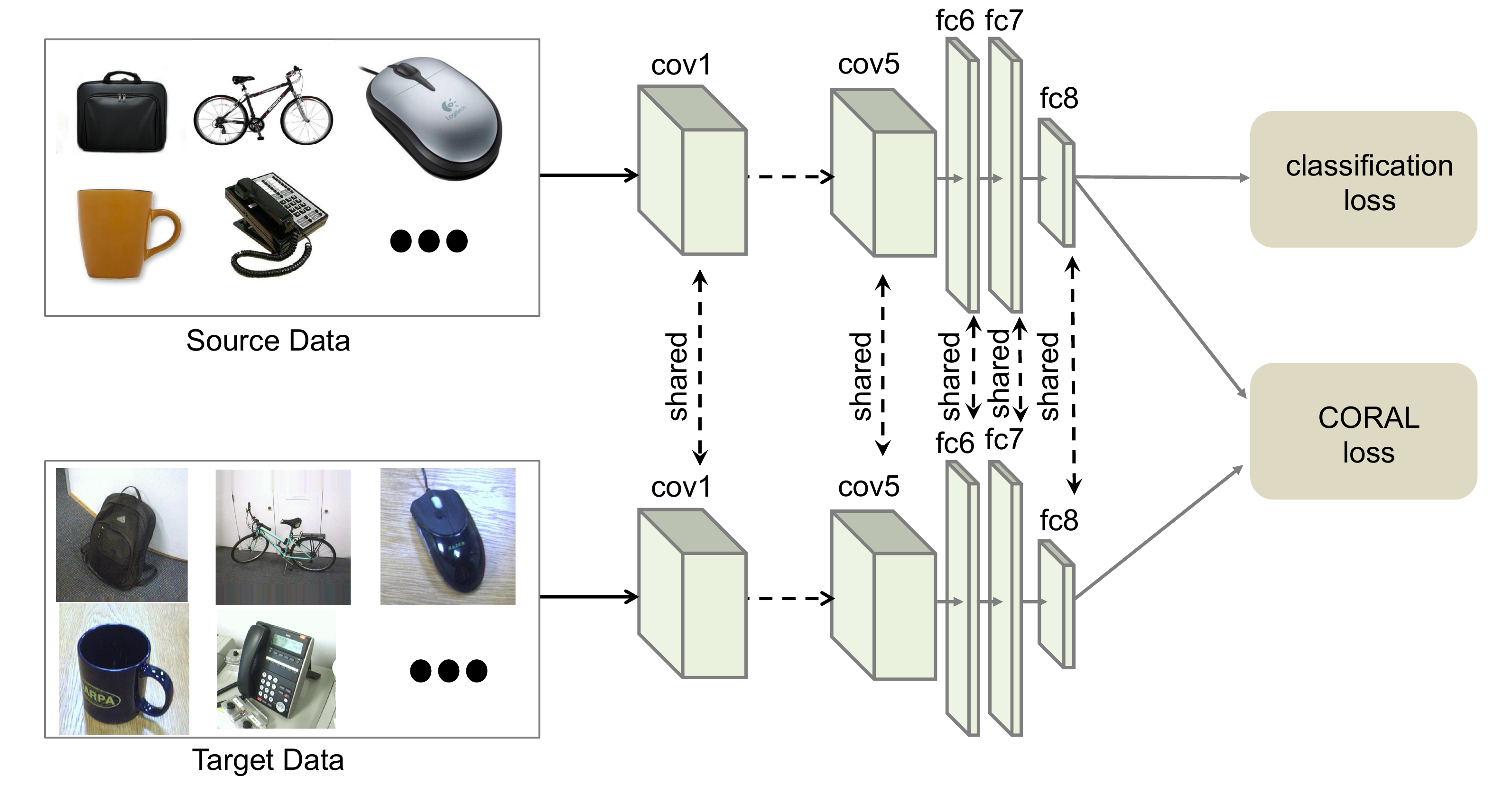}
\caption{\small Sample Deep CORAL architecture based on a CNN with a classifier layer. For generalization and simplicity, here we apply the CORAL loss to the $fc8$ layer of AlexNet~\cite{alexnet}. Integrating it into other layers or network architectures is also possible.}
\label{fig:d-coral}
\end{figure}

We first describe the CORAL loss between two domains for a single feature layer. Suppose the numbers of source and target data are $n_{S}$ and $n_{T}$ respectively. Here both $\xSrc$ and $\xTar$ are the $d$-dimensional deep layer activations $\phi(I)$ of input $I$ that we are trying to learn. Suppose $D_S^{ij}$ ($D_T^{ij}$) indicates the $j$-th dimension of the $i$-th source (target) data example and $C_{S}$ ($C_{T}$) denote the feature covariance matrices. 

We define the CORAL loss as the distance between the second-order statistics (covariances) of the source and target features:
      \begin{equation}
      \begin{aligned}
      {\mathcal{L}_{CORAL}}= {\frac{1}{4d^2}}{\| C_{S} - C_{T} \|}^2_F\\
      \end{aligned}
      \label{eq:coral}
      \end{equation}
where ${\|\cdot\|}^2_F$ denotes the squared matrix Frobenius norm. 
The covariance matrices of the source and target data are given by:
      \begin{equation}
      \begin{aligned}
      &C_{S}= {\frac{1}{n_{S}-1}}({D_S^{\top} D_S - \frac{1}{n_{S}}{({\textbf{1}}^{\top}D_S})^{\top}{({\textbf{1}}^{\top}D_S})})
      \end{aligned}
      \label{eq:cov_s}
      \end{equation}
      \begin{equation}
      \begin{aligned}
      &C_{T}= {\frac{1}{n_{T}-1}}({D_T^{\top} D_T - \frac{1}{n_{T}}{({\textbf{1}}^{\top}D_T})^{\top}{({\textbf{1}}^{\top}D_T})})
      \end{aligned}
      \label{eq:cov_t}
      \end{equation}
where $\textbf{1}$ is a column vector with all elements equal to 1. 

The gradient with respect to the input features can be calculated using the chain rule:
      \begin{equation}
      \begin{aligned}
      &\frac{\partial{\mathcal{L}_{CORAL}}}{\partial{D_S^{ij}}}=\frac{1}{d^2(n_S-1)}((D_S^{\top}-\frac{1}{n_{S}}({{\textbf{1}}^{\top}D_S})^{\top}{\textbf{1}}^{\top})^{\top}(C_{S} - C_{T}))^{ij}
      \end{aligned}
      \label{eq:gradient_s}
      \end{equation}
      \begin{equation}
      \begin{aligned}
      \frac{\partial{\mathcal{L}_{CORAL}}}{\partial{D_T^{ij}}}=-\frac{1}{d^2(n_T-1)}((D_T^{\top}-\frac{1}{n_{T}}({{\textbf{1}}^{\top}D_T})^{\top}{\textbf{1}}^{\top})^{\top}(C_{S} - C_{T}))^{ij}
      \end{aligned}
      \label{eq:gradient_t}
      \end{equation}
In our experiments, we use batch covariances and the network parameters are shared between the two networks, but other settings are also possible.

To see how this loss can be used to adapt an existing neural network, let us return to the multi-class classification problem. Suppose we start with a network with a final classification layer, such as the ConvNet shown in Figure~\ref{fig:d-coral}.
As mentioned before, the final deep features need to be both discriminative enough to train a strong classifier and invariant to the difference between source and target domains. Minimizing the classification loss itself is likely to lead to overfitting to the source domain, causing reduced performance on the target domain. On the other hand, minimizing the CORAL loss alone might lead to degenerated features. For example, the network could project all of the source and target data to a single point, making the CORAL loss trivially zero. However, no strong classifier can be constructed on these features. Joint training with both the classification loss and CORAL loss is likely to learn features that work well on the target domain:
      \begin{equation}
      \begin{aligned}
      {\mathcal{L}}= {\mathcal{L}_{CLASS}} + \sum_{i=1}^{t}\lambda_{i}{\mathcal{L}_{CORAL_i}}\\
      \end{aligned}
      \label{eq:obj}
      \end{equation}
where $t$ denotes the number of CORAL loss layers in a deep network and $\lambda_i$ is a weight that trades off the adaptation with classification accuracy on the source domain. As we show below, these two losses play counterparts and reach an~\emph{equilibrium} at the end of training, where the final features are discriminative and generalize well to the target domain.
\section{Experiments}
\label{sec:exp}


We evaluate CORAL and Deep CORAL on object recognition~\cite{saenko2010adapting} using standard benchmarks and protocols. In all experiments we assume the target domain is unlabeled. For CORAL, we follow the standard procedure~\cite{sa,decaf} and use a linear SVM as the base classifier. The model selection approach of~\cite{sa} is used to set the $C$ parameter for the SVM by doing cross-validation on the source domain. For CORAL-LDA, as efficiency is the main concern, we evaluate it on the more time constrained task--object detection. We follow the protocol of~\cite{ICRA14} and use HOG features. To have a fair comparison, we use accuracies reported by other authors with exactly the same setting or conduct experiments using the source code provided by the authors.

\subsection{Object Recognition}
\label{subsec:recog}
In this set of experiments, domain adaptation is used to improve the accuracy of an object classifier on novel image domains. Both the standard Office~\cite{saenko2010adapting} and extended Office-Caltech10~\cite{gfk} datasets are used as benchmarks in this chapter. Office-Caltech10 contains 10 object categories from an office environment (\eg, keyboard, laptop, etc.) in 4 image domains: $Webcam$, $DSLR$, $Amazon$, and $Caltech256$. The standard Office dataset contains 31 (the same 10 categories from Office-Caltech10 plus 21 additional ones) object categories in 3 domains: $Webcam$, $DSLR$, and $Amazon$.

\begin{table}
\centering
\resizebox{\columnwidth}{!}{
\begin{tabular}{|l||c|c|c|c|c|c|c|c|c|c|c|c|c|}
\hline
~ & A$\rightarrow$C & A$\rightarrow$D & A$\rightarrow$W & C$\rightarrow$A & C$\rightarrow$D & C$\rightarrow$W & D$\rightarrow$A & D$\rightarrow$C & D$\rightarrow$W & W$\rightarrow$A & W$\rightarrow$C & W$\rightarrow$D & AVG\\ 
\hline
NA & 35.8 & 33.1 & 24.9  &43.7 & 39.4 & 30.0 & 26.4 & 27.1  & 56.4 & 32.3 & 25.7 & 78.9 & 37.8\\ 
\hline
SVMA & 34.8 & 34.1  & 32.5 & 39.1 &  34.5 & 32.9 & 33.4  & 31.4  & 74.4 & 36.6 & 33.5 & 75.0 & 41.0\\ 
\hline
DAM & 34.9 & 34.3  & 32.5 & 39.2 &  34.7 & 33.1 & 33.5  & 31.5  & 74.7 & 34.7 & 31.2 & 68.3 & 40.2\\ 
\hline
GFK & 38.3 & 37.9  & 39.8&44.8 &  36.1 & 34.9 & 37.9  & 31.4  & 79.1 & 37.1 & 29.1 & 74.6 & 43.4\\ 
\hline
TCA & 40.0 & \textbf{39.1}  & \textbf{40.1}  & 46.7 &  \textbf{41.4} & 36.2 & 39.6  & 34.0  & 80.4 & \textbf{40.2} & 33.7 & 77.5 & 45.7\\ 
\hline
SA & 39.9 & 38.8  & 39.6  & 46.1 & 39.4 & 38.9 & \textbf{42.0}  & \textbf{35.0}  & 82.3 & 39.3 & 31.8 & 77.9 & 45.9\\ 
\hline
CORAL & \textbf{40.3} & 38.3 & 38.7 & \textbf{47.2} & 40.7 &\textbf{39.2} & 38.1 & 34.2 & \textbf{85.9} & 37.8 &\textbf{ 34.6} &\textbf{84.9} & \textbf{46.7}\\ 
\hline
\end{tabular}
}
\caption{\small Object recognition accuracies of all 12 domain shifts on the Office-Caltech10 dataset~\cite{gfk} with SURF features, following the protocol of~\cite{gfk,sa,gopalan-iccv11,ref:kulis_cvpr11,saenko2010adapting}.}
\label{tab:result_office-caltech10_surf}
\vspace{-0.1in}
\end{table}

\paragraph{\textbf{Object Recognition with Shallow Features}}
We follow the standard protocol of~\cite{gfk,sa,gopalan-iccv11,ref:kulis_cvpr11,saenko2010adapting} and conduct experiments on the Office-Caltech10 dataset with shallow features (SURF).
The SURF features were encoded with 800-bin bag-of-words histograms and normalized to have zero mean and unit standard deviation in each dimension.  Since there are four domains, there are 12 experiment settings, namely, A$\rightarrow$C (train classifier on (A)mazon, test on (C)altech), A$\rightarrow$D (train on (A)mazon, test on (D)SLR), A$\rightarrow$W, and so on. We follow the standard protocol and conduct experiments in 20 randomized trials for each domain shift and average the accuracy over the trials. In each trial, we use the standard setting~\cite{gfk,sa,gopalan-iccv11,ref:kulis_cvpr11,saenko2010adapting} and randomly sample the same number (20 for $Amazon$, $Caltech$, and $Webcam$; 8 for $DSLR$ as there are only 8 images per category in the $DSLR$ domain) of labelled images in the source domain as training set, and use all the unlabelled data in the target domain as the test set. 

In Table~\ref{tab:result_office-caltech10_surf}, we compare our method to five recent published methods: SVMA~\cite{svma}, DAM~\cite{ref:duan09}, GFK~\cite{gfk}, SA~\cite{sa}, and TCA~\cite{tca} as well as the no adaptation baseline (NA). GFK, SA, and TCA are manifold based methods that project the source and target distributions into a lower-dimensional manifold. GFK integrates over an infinite number of subspaces along the subspace manifold using the kernel trick. SA aligns the source and target subspaces by computing a linear map that minimizes the Frobenius norm of their difference. TCA performs domain adaptation via a new parametric kernel using feature extraction methods by projecting data onto the learned transfer components. DAM introduces smoothness assumption to enforce the target classifier share similar decision values with the source classifiers. Even though these methods are far more complicated than ours and require tuning of hyperparameters (\eg, subspace dimensionality), our method achieves the best average performance across all the 12 domain shifts. Our method also improves on the no adaptation baseline (NA), in some cases increasing accuracy significantly (from 56\% to 86\% for D$\rightarrow$W).

\paragraph{\textbf{Object Recognition with Deep Features}} 
For visual domain adaptation with deep features, we follow the standard protocol of~\cite{gfk,dan_long15,decaf,tzeng_arxiv15,reversegrad} and use all the labeled source data and all the target data without labels on the standard Office dataset~\cite{saenko2010adapting}. Since there are 3 domains, we conduct experiments on all 6 shifts (5 runs per shift), taking one domain as the source and another as the target. 

In this experiment, we apply the CORAL loss to the last classification layer
as it is the most general case--most deep classifier architectures (\eg, convolutional neural networks, recurrent neural networks) contain a fully connected layer for classification. Applying the CORAL loss to other layers or other network architectures is also possible. The dimension of the last fully connected layer ($fc8$) was set to the number of categories (31) and initialized with $\mathcal{N}(0,0.005)$. The learning rate of $fc8$ was set to 10 times the other layers as it was training from scratch. We initialized the other layers with the parameters pre-trained on ImageNet~\cite{imagenet} and kept the original layer-wise parameter settings. In the training phase, we set the batch size to 128, base learning rate to $10^{-3}$, weight decay to $5\times10^{-4}$, and momentum to 0.9. The weight of the CORAL loss ($\lambda$) is set in such way that at the end of training the classification loss and CORAL loss are roughly the same. It seems be a reasonable choice as we want to have a feature representation that is both discriminative and also minimizes the distance between the source and target domains. We used Caffe~\cite{caffe} and BVLC Reference CaffeNet for all of our experiments.

We compare to 7 recently published methods: CNN~\cite{alexnet} (no adaptation), GFK~\cite{gfk}, SA~\cite{sa}, TCA~\cite{tca}, CORAL~\cite{coral}, DDC~\cite{tzeng_arxiv15}, DAN~\cite{dan_long15}. GFK, SA, and TCA are manifold based methods that project the source and target distributions into a lower-dimensional manifold and are not end-to-end deep methods. DDC adds a domain confusion loss to AlexNet~\cite{alexnet} and fine-tunes it on both the source and target domain. DAN is similar to DDC but utilizes a multi-kernel selection method for better mean embedding matching and adapts in multiple layers. For direct comparison, DAN in this paper uses the hidden layer $fc8$. For GFK, SA, TCA, and CORAL, we use the $fc7$ feature fine-tuned on the source domain ($FT7$ in~\cite{coral}) as it achieves better performance than generic pre-trained features, and train a linear SVM~\cite{sa,coral}. To have a fair comparison, we use accuracies reported by other authors with exactly the same setting or conduct experiments using the source code provided by the authors.

From Table~\ref{tab:result_office31_deep} we can see that Deep CORAL (D-CORAL) achieves better average performance than CORAL and the other 6 baseline methods. In 3 out of 6 shifts, it achieves the highest accuracy. For the other 3 shifts, the margin between D-CORAL and the best baseline method is very small ($\leqslant0.7$).

\begin{table}[tb]
\begin{center}
\resizebox{0.8\columnwidth}{!}{
\begin{tabular}{|l||c|c|c|c|c|c|c|}
\hline
~ & A$\rightarrow$D&	A$\rightarrow$W	&D$\rightarrow$A&	D$\rightarrow$W	&W$\rightarrow$A	&W$\rightarrow$D&AVG\\ 
\hline
GFK	&52.4\scriptsize{$\pm$0.0}	&54.7\scriptsize{$\pm$0.0}	&43.2\scriptsize{$\pm$0.0}	&92.1\scriptsize{$\pm$0.0}	&41.8\scriptsize{$\pm$0.0}	&96.2\scriptsize{$\pm$0.0}  &63.4\\
\hline
SA	&50.6\scriptsize{$\pm$0.0}	&47.4\scriptsize{$\pm$0.0}	&39.5\scriptsize{$\pm$0.0}	&89.1\scriptsize{$\pm$0.0}	&37.6\scriptsize{$\pm$0.0}	&93.8\scriptsize{$\pm$0.0}  &59.7\\
\hline
TCA	&46.8\scriptsize{$\pm$0.0}	&45.5\scriptsize{$\pm$0.0}	&36.4\scriptsize{$\pm$0.0}	&81.1\scriptsize{$\pm$0.0}	&39.5\scriptsize{$\pm$0.0}	&92.2\scriptsize{$\pm$0.0}  &56.9\\
\hline
CORAL	&65.7\scriptsize{$\pm$0.0}	&64.3\scriptsize{$\pm$0.0}	&48.5\scriptsize{$\pm$0.0}	&\textbf{96.1}\scriptsize{$\pm$0.0}	&48.2\scriptsize{$\pm$0.0}	&\textbf{99.8}\scriptsize{$\pm$0.0}  &70.4\\
\hline
CNN	 &63.8\scriptsize{$\pm$0.5}	&61.6\scriptsize{$\pm$0.5}	&51.1\scriptsize{$\pm$0.6}	&95.4\scriptsize{$\pm$0.3}	&49.8\scriptsize{$\pm$0.4}	&99.0\scriptsize{$\pm$0.2}  &70.1\\
\hline
DDC	 &64.4\scriptsize{$\pm$0.3}	&61.8\scriptsize{$\pm$0.4}	&52.1\scriptsize{$\pm$0.8}	&95.0\scriptsize{$\pm$0.5}	&\textbf{52.2}\scriptsize{$\pm$0.4}	&98.5\scriptsize{$\pm$0.4}  &70.6\\
\hline
DAN	 &65.8\scriptsize{$\pm$0.4}	&63.8\scriptsize{$\pm$0.4}	&\textbf{52.8}\scriptsize{$\pm$0.4}	&94.6\scriptsize{$\pm$0.5}	&51.9\scriptsize{$\pm$0.5}	&98.8\scriptsize{$\pm$0.6}  &71.3\\
\hline
D-CORAL	&\textbf{66.8}\scriptsize{$\pm$0.6}	&\textbf{66.4}\scriptsize{$\pm$0.4}	
&\textbf{52.8}\scriptsize{$\pm$0.2}	
&95.7\scriptsize{$\pm$0.3}	
&51.5\scriptsize{$\pm$0.3}	
&99.2\scriptsize{$\pm$0.1}  &\textbf{72.1}\\
\hline
\end{tabular}
}
\end{center}
\caption{Object recognition accuracies for all 6 domain shifts on the standard Office dataset with deep features, following the standard unsupervised adaptation protocol.}
\label{tab:result_office31_deep}
\vspace{-0.1in}
\end{table}

\paragraph{\textbf{Domain Adaptation Equilibrium}}

\begin{figure}[t]
\centering
\includegraphics[width=\linewidth]{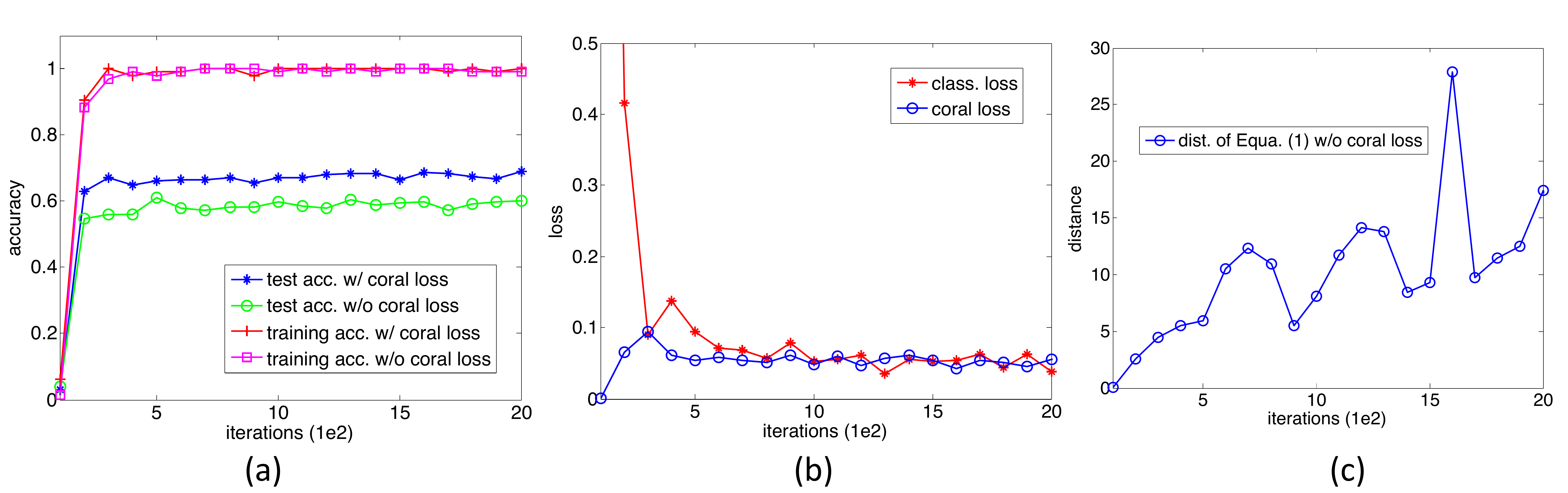}
\caption{Detailed analysis of shift A$\rightarrow$W for training w/ v.s. w/o CORAL loss. (a): training and test accuracies for training w/ v.s. w/o CORAL loss. We can see that adding CORAL loss helps achieve much better performance on the target domain while maintaining strong classification accuracy on the source domain. (b): classification loss and CORAL loss for training w/ CORAL loss. As the last fully connected layer is randomly initialized with $\mathcal{N}(0,0.005)$, CORAL loss is very small while classification loss is very large at the beginning. After training for a few hundred iterations, these two losses are about the same. (c): CORAL distance for training w/o CORAL loss  (setting the weight to 0). The distance is getting much larger ($\geqslant100$ times larger compared to training w/ CORAL loss).}
\label{fig:a_w}
\end{figure}

To get a better understanding of Deep CORAL, we generate three plots for domain shift A$\rightarrow$W. In Figure~\ref{fig:a_w}(a) we show the training (source) and testing (target) accuracies for training with v.s. without CORAL loss. We can clearly see that adding the CORAL loss helps achieve much better performance on the target domain while maintaining strong classification accuracy on the source domain. 

In Figure~\ref{fig:a_w}(b) we visualize both the classification loss and the CORAL loss for training w/ CORAL loss. As the last fully connected layer is randomly initialized with $\mathcal{N}(0,0.005)$,  in the beginning the CORAL loss is very small while the classification loss is very large. After training for a few hundred iterations, these two losses are about the same and reach an~\emph{equilibrium}. In Figure~\ref{fig:a_w}(c) we show the CORAL distance between the domains for training w/o CORAL loss (setting the weight to 0). We can see that the distance is getting much larger ($\geqslant100$ times larger compared to training w/ CORAL loss). Comparing Figure~\ref{fig:a_w}(b) and Figure~\ref{fig:a_w}(c), we can see that even though the CORAL loss is not always decreasing during training, if we set its weight to 0, the distance between source and target domains becomes much larger. This is reasonable as fine-tuning without domain adaptation is likely to overfit the features to the source domain. Our CORAL loss constrains the distance between source and target domain during the fine-tuning process and helps to maintain an~\emph{equilibrium} where the final features work well on the target domain.


\subsection{Object Detection}
\label{subsec:coral-lda}

\begin{figure}[b]
\centering
\includegraphics[width=0.9\linewidth]{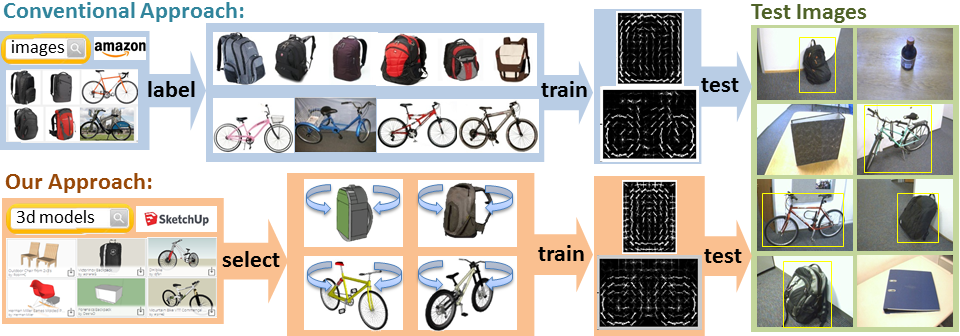}
\caption{\small Overview of our evaluation on object detection. With CORAL-Linear Discriminant Analysis (LDA), we show that a strong object detector can be trained from virtual data only.}
\label{chap4_fig:overview}
\end{figure}

Following protocol of~\cite{ICRA14}, we conduct object detection experiment on the Office dataset~\cite{saenko2010adapting} with HOG features. We use the same setting as~\cite{ICRA14}, performing detection on the \textit{Webcam} domain as the target (test) domain, and evaluating on the same 783 image test set of 20 categories (out of 31). As source (training) domains, we use: the two remaining real-image domains in Office,~\textit{Amazon} and~\textit{DSLR}, and two domains that contain virtual images only,~\textit{Virtual} and~\textit{Virtual-Gray}, generated from 3d CAD models. The inclusion of the two virtual domains is to reduce human effort in annotation and facilitate future research~\cite{sun16phdthesis}. Examples of~\textit{Virtual} and~\textit{Virtual-Gray} are shown in Figure~\ref{chap4_fig:virtual_data}. Please refer to~\cite{sun15virtualdataset} for detailed explanation of the data generation process.
We also compare to~\cite{ICRA14} who use corresponding ImageNet~\cite{imagenet} synsets as the source.
Thus, there are four potential source domains (two synthetic and three real) and one (real) target domain. The number of positive training images per category in each domain is shown in Table~\ref{chap4_tab:ntrain}. Figure~\ref{chap4_fig:overview} shows an overview of our evaluation.

\begin{figure}
\centering
\begin{tabular}{cc}
\includegraphics[width=5cm]{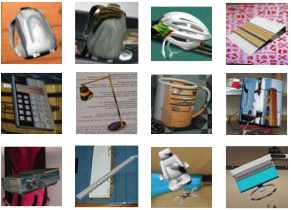} 
\hspace{\fill} &
\includegraphics[width=5cm]{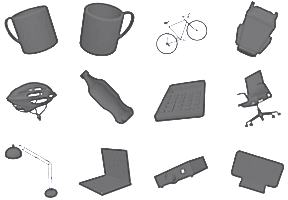} \\
(a)&(b)
\end{tabular}
\caption{\small Two sets of virtual images used in this section: (a) \textit{Virtual}: background and texturemap from a random real ImageNet image; (b) \textit{Virtual-Gray}:  uniform gray texturemap and white background. Please refer to~\cite{sun15virtualdataset} for detailed explanation of the data generation process.}
\label{chap4_fig:virtual_data}
\end{figure}

\begin{table}
\centering
\begin{small}
\begin{tabular}{|l|c|}
\hline
Domain & \# Training sample  \\
\hline \hline
Amazon & 20 \\ \hline
DSLR   & 8  \\ \hline
Virtual(-Gray) & 30 \\ \hline
ImageNet & 150-2000 \\
\hline
\end{tabular}
\caption{\small \# training examples for each source domain.}
\label{chap4_tab:ntrain}
\end{small}
\end{table}

\paragraph{\textbf{Effect of Mismatched Image Statistics}}
First, we explore the effect of mismatched precomputed image statistics on detection performance.
For each source domain, we train CORAL-LDA detectors using the positive mean from the source, and pair it with the covariance and negative mean of other domains. The virtual and the Office domains are used as sources, and the test domain is always Webcam. 
The statistics for each of the four domains were calculated using all of the training data,
following the same approach as~\cite{who}. The pre-computed statistics of 10,000 real images from PASCAL, as proposed in~\cite{who,ICRA14}, are also evaluated.

Detection performance, measured in Mean Average Precision (MAP), is shown in Table ~\ref{chap4_tab:performance}. We also calculate the normalized Euclidean distance between pairs of domains as $ (\|\mat{C^1} - \mat{C^2} \|) / (\|\mat{C^1}\|+\|\mat{C^2}\|) + (\|\vec{\mu_{0}^1}-\vec{\mu_{0}^2} \|) / (\| \vec{\mu_0^1}\| + \| \vec{\mu_0^2} \|) $, and show the average distance between source and target in parentheses in Table~\ref{chap4_tab:performance}.

From these results we can see a trend that larger domain difference leads to poorer performance. Note that larger difference to the target domain also leads to lower performance, confirming our hypothesis that both source and target statistics matter. Some of the variation could also stem from our assumption about the difference of means being the same not quite holding true. Finally, the PASCAL statistics from~\cite{who} perform the worst. Thus, in practice, statistics from either source domain or target domain or domains close to them could be used. However, unrelated statistics will not work even though they might be estimated from a very large amount of data as~\cite{who}. 

\begin{table}
\begin{small}
\begin{center}
\begin{tabular}{|l||c|c|c|c|c|c|}
\hline
  & Virtual & Virtual-Gray & Amazon  & DSLR & PASCAL\\ 
\hline
Virtual & 30.8 (\textit{0.1}) & 16.5 (\textit{1.0}) & 24.1 (\textit{0.6}) & 28.3 (\textit{0.2}) & 10.7 (\textit{0.5})\\ \hline
Virtual-Gray   & 32.3 (\textit{0.6}) & 32.3 (\textit{0.5}) & 27.3 (\textit{0.8}) & 32.7 (\textit{0.6}) & 17.9 (\textit{0.7})  \\ \hline
Amazon        & 39.9 (\textit{0.4}) & 30.0 (\textit{1.0}) & 39.2 (\textit{0.4}) & 37.9 (\textit{0.4}) & 18.6 (\textit{0.6})  \\ \hline
DSLR          &  68.2 (\textit{0.2}) & 62.1 (\textit{1.0}) & 68.1 (\textit{0.6}) & 66.5 (\textit{0.1}) & 37.7 (\textit{0.5})  \\ \hline
\end{tabular}
\end{center}
\caption{\small MAP of CORAL-LDA trained on positive examples from each row's source domain and background statistics from each column's domain. The average distance between each set of background statistics and the true source and target statistics is shown in parentheses. }\label{chap4_tab:performance}
\end{small}
\end{table}

\paragraph{\textbf{Unsupervised and Semi-supervised Adaptation}}
Next, we report the results of our unsupervised and semi-supervised adaptation technique. 
We use the same setting as~\cite{ICRA14}, in which three positive and nine negative labeled images per category were used for semi-supervised adaptation. 
Target covariance in Equation~\ref{eq:lda4} is estimated from 305 unlabeled training examples. We also followed the same approach to learn a linear combination between the unsupervised and supervised model via cross-validation. The results are presented in Table~\ref{tab:office}. Please note that our target-only MAP is 52.9 compared to 36.6 in~\cite{ICRA14}. This also confirms our conclusion that the statistics should come from a related domain. It is clear that both of our unsupervised and semi-supervised adaptation techniques outperform the method in~\cite{ICRA14}. Furthermore, \textit{Virtual-Gray} data outperforms~\textit{Virtual}, and \textit{DSLR} does best, as it is very close to the target domain (the main difference between $DLSR$ and $Webcam$ domains is in the camera used to capture images).

Finally, we compare our method trained on \textit{Virtual-Gray} to the results of adapting from ImageNet reported by~\cite{ICRA14}, in Figure ~\ref{chap4_fig:barplot}.
While their unsupervised models are learned from 150-2000 real ImageNet images per category and the background statistics are estimated from 10,000 PASCAL images, we only have 30 virtual images per category and the background statistics is learned from about 1,000 images. What's more, all the virtual images used are with uniform gray texturemap and white background. This clearly demonstrates the importance of domain-specific decorrelation, and shows that the there is no need to collect a large amount of real images to train a good classifier. 

\begin{table}
\begin{small}
\begin{center}
\begin{tabular}{|l||c|c||c|c|}
\hline
Source & Source-only~\cite{who} & Unsup-Ours & SemiSup~\cite{ICRA14} & SemiSup-Ours \\ 
\hline
Virtual & 10.7  &  27.9 & 30.7 & 45.2 \\ \hline
Virtual-Gray  & 17.9 & 33.0 & 35.0 & 54.7 \\ \hline
Amazon        & 18.6 & 38.9 & 35.8 & 53.0 \\ \hline
DSLR          & 37.7 & 67.1 & 42.9 & 71.4 \\ \hline
\end{tabular}
\includegraphics[width=\linewidth,height=1in]{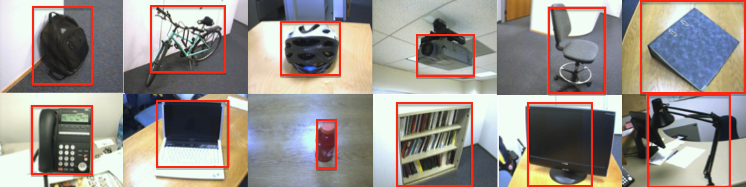}
\end{center}
\caption{\small Top: Comparison of the source-only~\cite{who} and semi-supervised adapted model of~\cite{ICRA14} with our unsupervised-adapted and semi-supervised adapted models. Target domain is $Webcam$. Mean AP across categories is reported on the $Webcam$ test data, using different source domains for training. Bottom: Sample detections of the DSLR-UnsupAdapt-Ours detectors.}
\label{tab:office}
\end{small}
\end{table}

\begin{figure}
\centering
\includegraphics[width=\linewidth]{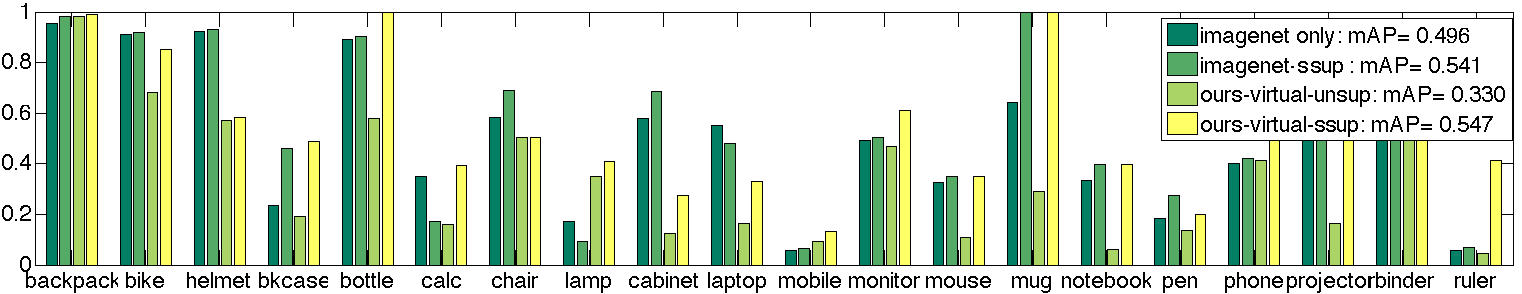}
\caption[\small Comparison of unsupervised and semi-supervised adaptation of virtual detectors using our method with the results of training on ImageNet and supervised adaptation from ImageNet reported in~\cite{ICRA14}]{\small Comparison of unsupervised and semi-supervised adaptation of virtual detectors using our method with the results of training on ImageNet and supervised adaptation from ImageNet reported in~\cite{ICRA14}. Our semi-supervised adapted detectors achieve comparable performance despite not using any real source training data, and using only 3 positive images for adaptation, and even outperform ImageNet significantly for several categories (\eg,~\textit{ruler}).}
\label{chap4_fig:barplot}
\end{figure}
\section{Conclusion}
\label{sec:con}
In this chapter, we described a simple, effective, and efficient method for unsupervised domain adaptation called CORrelation ALignment (CORAL). CORAL minimizes domain shift by aligning the second-order statistics of source and target distributions, without requiring any target labels. We also developed novel domain adaptation algorithms by applying the idea of CORAL to three different scenarios. In the first scenario, we applied a linear transformation that minimizes the CORAL objective to source features prior to training the classifier. In the case of linear classifiers, we equivalently applied the linear CORAL transform to the classifier weights, significantly improving efficiency and classification accuracy over standard LDA on several domain adaptation benchmarks. We further extended CORAL to learn a nonlinear transformation that aligns correlations of layer activations in deep neural networks. The resulting Deep CORAL approach works seamlessly with deep networks and can be integrated into any arbitrary network architecture to enable end-to-end unsupervised adaptation. One limitation of CORAL is that it captures second-order statistics only and may not preserve higher-order structure in the data. However, as demonstrated in this chapter, it works fairly well in practice, and can also potentially be combined with other domain-alignment loss functions. 

\bibliographystyle{plain}
\bibliography{coral.bib}

\end{document}